\documentclass[11pt]{article}

\usepackage[utf8]{inputenc}
\usepackage[T1]{fontenc}

\usepackage{amsmath,amssymb,amsthm}
\usepackage{bbm}
\usepackage{mathtools}

\usepackage{booktabs}
\usepackage{array}
\usepackage{multirow}
\usepackage{makecell}

\usepackage[margin=1in]{geometry}
\usepackage{enumitem}
\usepackage[none]{hyphenat}   
\usepackage{adjustbox}         
\usepackage[numbers,sort&compress]{natbib}

\usepackage[colorlinks=true,citecolor=blue,linkcolor=blue,urlcolor=blue]{hyperref}

\newcommand{\indicator}{\mathbbm{1}}

\newtheorem{definition}{Definition}
\newtheorem{lemma}{Lemma}
\newtheorem{proposition}{Proposition}
\newtheorem{remark}{Remark}

\title{Why Code, Why Now: An Information-Theoretic Perspective on the Limits of Machine Learning}

\author{Zhimin Zhao\\
Software Analysis and Intelligence Lab (SAIL)\\
School of Computing, Queen's University\\
\texttt{z.zhao@queensu.ca}}
\date{}

\begin{document}
\maketitle

\begin{abstract}
This paper offers a new perspective on the limits of machine learning: the ceiling on progress is set not by model size or algorithm choice but by the information structure of the task itself. Code generation has progressed more reliably than reinforcement learning, largely because code provides dense, local, verifiable feedback at every token, whereas most reinforcement learning problems do not. This difference in feedback quality is not binary but graded. We propose a five-level hierarchy of learnability based on information structure and argue that diagnosing a task's position in this hierarchy is more predictive of scaling outcomes than any property of the model. The hierarchy rests on a formal distinction among three properties of computational problems (expressibility, computability, and learnability). We establish their pairwise relationships, including where implications hold and where they fail, and present a unified template that makes the structural differences explicit. The analysis suggests why supervised learning on code scales predictably while reinforcement learning does not, and why the common assumption that scaling alone will solve remaining ML challenges warrants scrutiny.
\end{abstract}

\section{Introduction}
\label{sec:introduction}

Code is discrete, symbolic, and syntactically unforgiving: a single misplaced character can render an entire program useless, and long-range dependencies span hundreds of lines.
Moreover, correctness standards are absolute: a program either compiles or it does not.
By every intuition about what should be easy for machines, code generation should therefore be among the hardest tasks in artificial intelligence.

Instead, it is where ML has made its most consistent progress.
Large models trained with prediction objectives now write nontrivial programs, refactor codebases, and solve algorithmic problems previously out of reach~\citep{chen2021}.
Meanwhile, reinforcement learning (RL), despite being interactive, closed-loop, and adaptive by design, has produced striking successes in closed domains such as board games~\citep{silver2016} and robotic control.
Recent RL-trained reasoning models~\citep{deepseekr1} have also achieved impressive results on code and mathematics, though invariably by building on supervised pretrained models and coupling RL with structured verification (Section~\ref{sec:expressibility-hurts}).
Yet RL alone consistently struggles to accumulate transferable competence across tasks, even with massive interaction budgets.

The familiar explanations for this disparity do not withstand scrutiny.
``RL needs more compute''~\citep{kaplan2020} does not explain why the pattern persists across orders of magnitude in scale, and ``the reward signal is too sparse''~\citep{sutton2018} does not explain why sparse feedback works for some tasks and not others.
More broadly, the pattern survives improvements in hardware, optimizers, simulators, and representation learning.
This persistence suggests that the primary obstacle is structural rather than architectural.

A widespread misconception has taken hold: that end-to-end neural networks with cutting-edge hardware and Internet-scale data will eventually solve any problem.
The successes in code generation~\citep{chen2021}, Go~\citep{silver2016}, protein structure prediction~\citep{jumper2021}, and image synthesis are taken as evidence that scaling suffices~\citep{kaplan2020}.
But each of these domains shares specific preconditions, objective evaluation functions, bounded solution spaces, massive structured datasets, and dense feedback signals, that align with what gradient descent and pattern matching can exploit.
The domains that remain unsolved, by contrast, resist scaling because their information structure does not support learning.

This paper examines the problems themselves rather than the models, asking what makes a task learnable at scale: not merely computable, expressible, or solvable in principle, but learnable by large models under realistic data and interaction regimes in a way that improves smoothly rather than collapses unpredictably.

We make three contributions:
\begin{enumerate}
    \item We propose a five-level hierarchy of learnability based on information structure, ranging from complete unobservability (Level~0) to deterministic verification (Level~4), and show how it diagnoses when scaling will and will not help.
    \item We distinguish three properties of computational problems (expressibility, computability, and learnability), establish their pairwise relationships, and present a unified formal template that makes the structural differences explicit.
    \item We analyze why supervised learning on code scales predictably while reinforcement learning does not, grounding the explanation in information structure rather than algorithmic or architectural differences.
\end{enumerate}

\section{Related Work}
\label{sec:related}

\paragraph{Language identification and generation in the limit.}
\citet{gold1967} establishes the framework for language identification in the limit, proving that superfinite classes cannot be identified from positive data alone.
\citet{angluin1980} extends this line of work by characterizing identifiable classes through the notion of ``tell-tale'' finite sets.
\citet{kleinberg2024} relax identification to \emph{generation}, asking only whether a learner can eventually produce novel valid strings rather than converge on a grammar.
They show that generation in the limit succeeds for strictly larger classes than identification.
\citet{charikar2024} and, independently, \citet{li2025generation} prove that every countable language collection admits non-uniform generation in the limit; the latter work also introduces the uniform/non-uniform distinction.
\citet{kalavasis2025limits} and \citet{charikar2024} concurrently formalize an inherent validity--breadth tradeoff (hallucination versus mode collapse); \citet{kalavasis2025limits} additionally study a stochastic generation model.
Subsequent work~\citep{compbarriers2025} demonstrates computational feasibility barriers even for simple language classes, connecting generation complexity to classical computability theory.

\paragraph{PAC learning and VC theory.}
Compared with language identification in the limit, a weaker formal model of learnability is Probably Approximately Correct (PAC) learning.
\citet{valiant1984} introduces the PAC framework, which formalizes learnability through sample complexity bounds.
\citet{blumer1989} connect PAC learnability to the Vapnik--Chervonenkis (VC) dimension, establishing that finite VC dimension is both necessary and sufficient for PAC learnability under any distribution.
\citet{shalev2010} unify learnability, stability, and uniform convergence, showing their equivalence for binary classification.

\paragraph{Computability and undecidability.}
\citet{turing1936} establishes the existence of problems no algorithm can solve, with the halting problem as the canonical example.
The relationship between computability and learnability is not one of simple containment: computable functions can be unlearnable (e.g., cryptographic functions), and certain generation tasks succeed on classes that include non-recursive languages~\citep{kleinberg2024}.

\paragraph{Scaling laws and reinforcement learning.}
\citet{chu2025} show that supervised fine-tuning memorizes the training distribution while RL can generalize beyond it, but only with supervised scaffolding.
\citet{zhang2026} demonstrate that stronger supervised checkpoints can underperform weaker ones after RL, due to distribution divergence.
\citet{cui2025} identify policy entropy exhaustion as a fundamental bottleneck in RL for language models.
Independently, \citet{rohatgi2025} propose a computational taxonomy for RL organized by which supervised learning oracles are necessary and sufficient, identifying minimal oracles for Block MDPs and cryptographic hardness barriers for Low-Rank MDPs.
These findings support our thesis that the obstacle is in the problem's information structure, not in model capacity.
At the same time, recent RL-trained reasoning models~\citep{deepseekr1} achieve strong code and mathematics performance, and process reward models~\citep{lightman2023} recover step-level feedback that outcome-based RL discards.
We analyze why these hybrid successes are consistent with, rather than counter to, our information-structure thesis in Section~\ref{sec:expressibility-hurts}.

\paragraph{Structural information and computational observers.}
\citet{jiang2025epiplexity} decompose the information in a dataset into two components relative to a computationally bounded observer: \emph{epiplexity} (structural information extractable into model weights) and \emph{time-bounded entropy} (residual unpredictability). They prove that cryptographically secure pseudorandom generators have nearly maximal time-bounded entropy but negligible epiplexity, formalizing the intuition that computable functions can be unlearnable from observation alone. Empirical measurements show that language data carries far more structural information than image data at comparable compute budgets, and that the same data under different factorizations yields different amounts of learnable structure. These results provide formal backing for the information-structure perspective developed in the present paper.

\paragraph{Goodhart's law and reward misspecification.}
\citet{manheim2019} categorize variants of Goodhart's law.
Recent work~\citep{goodhart2024} formalizes distributional Goodhart effects, and geometric analysis of MDPs~\citep{goodhartmdp2024} shows that Goodhart failure is the default outcome of unconstrained proxy optimization.

\section{What Makes Code Special}
\label{sec:code}

Code exposes information to learning algorithms in a way that few other domains match.
The properties below are not unique to code: natural language has grammar, images have spatial structure, and mathematics has logical constraints. However, code is distinguished by the \emph{degree} to which all three properties co-occur and the \emph{strength} of the verification infrastructure that enforces them. These properties create a rich, dense, and structurally aligned feedback signal that supervised learning can exploit, while the absence of any one of them creates a gap that RL struggles to bridge.

\subsection{Hard Syntactic Constraints}

For conventional statically parsed languages, a deterministic procedure decides in polynomial time whether any string is syntactically valid, yielding a binary answer: valid or invalid~\citep{aho2006}.
A natural language sentence with a dangling modifier still communicates, whereas a program with a mismatched brace or an undeclared variable does not compile in statically checked languages, and even in dynamically typed languages, syntactic violations produce immediate, unambiguous errors.
For a learning system, this distinction matters: every training example of valid code carries a precise signal about the rules of the language, and every violation stands out sharply.

\subsection{Locally Identifiable Errors}

Code is not only globally checkable but also locally structured in ways that correspond to verifiable constraints.
Type consistency, scope rules, interface matching, and dimensional invariants catch errors without examining the entire program.
For instance, a type error points to a particular line and a particular mismatch, and a missing variable declaration points to the exact scope.
These localized diagnostics give learning systems dense, targeted error signals.
By contrast, errors in natural language meaning are diffuse, because no analogous mechanism pinpoints where a sentence goes wrong.

\subsection{Strong Compositionality}

The meaning of a program built from two components depends primarily on the meanings of those components, not on the global context.
A pure sorting function produces the same output regardless of whether it appears inside a web server or a data pipeline, provided it has no side effects or shared state.
Common idioms, function signatures, and algorithmic templates recur across projects.
As a result, patterns learned from one codebase remain useful in another.

\subsection{Why Supervised Learning Outperforms Reinforcement Learning on Code}

Together, these three properties mean that each training example is rich in localized, verifiable, reusable signals.
When a model trains on millions of existing programs, each program serves as a positive example of the language, a collection of reusable local structures, and an implicit vote on the constraints that govern valid code.
The resulting signal is high-dimensional, dense, and structurally aligned with the task.

These same properties also suppress \emph{shortcut learning}: the tendency of gradient descent to latch onto spurious correlations that hold in training data but fail under distribution shift~\citep{geirhos2020}.
In natural-image classification, a model can achieve high accuracy by relying on background texture rather than object shape, because no hard constraint forces it to attend to the causal feature.
Code is different: a spurious surface correlation cannot substitute for syntactic and type correctness.
The compiler enforces causal features, correct syntax, matching types, proper scoping, and rejects outputs that rely on surface statistics alone.
This makes the gap between ``features that correlate with validity'' and ``features that cause validity'' unusually narrow in code, which is precisely the condition a learner needs.

A common intuition holds that code generation should be ideal for RL, since code has a natural reward signal (pass or fail).
In practice, the binary pass/fail reward is low-dimensional (a single bit), easily gamed through hard-coding, and structurally uninformative: it says the program failed but not where or why.
In its default form, the binary reward acts as a filter that accepts or rejects completed outputs rather than as a teacher that guides the model toward correct intermediate steps.
Shaped rewards and auxiliary signals can narrow this gap, but they require domain-specific engineering that reintroduces the very structure supervised learning gets for free.
In supervised training, the model sees the correct answer at every step, whereas in RL it sees only whether the completed program passes.
The resulting difference in information density between a single pass/fail bit and full token-level supervision remains substantial.

\subsection{Formal Grounding}

These properties align with a formal framework developed by \citet{gold1967} and extended by \citet{kleinberg2024}.
A learner that observes only valid strings from an unknown language can eventually generate new valid strings, even when it cannot identify which language it is observing.
We formalize this distinction in Section~\ref{sec:hierarchy} and show how it grounds code generation's tractability.

\section{A Hierarchy of Learnability}
\label{sec:hierarchy}

When a learning system fails, we typically ask: is the model too small? The optimizer too weak? The data too scarce?
These questions assume the problem is solvable and that the bottleneck is engineering.
But learning problems can fail for entirely different reasons, and conflating them leads to wasted resources and misplaced optimism.

The critical concept is \emph{information structure}, defined as the mechanism by which the world exposes distinguishing information to a learner.
It determines whether that information exists at all, when it appears, and whether it can be verified.
Information structure constrains learning more fundamentally than model size, algorithm choice, or computational budget, because no amount of any of the latter can overcome the absence of the former.
\citet{jiang2025epiplexity} formalize a related intuition by decomposing dataset information into \emph{structural information} (learnable patterns extractable into model weights) and \emph{time-bounded entropy} (residual unpredictability) relative to a computationally bounded observer. Two datasets with identical total information can differ in how much structure a bounded learner can extract, and it is the structural component that determines what is learnable.

With this concept, we distinguish five levels of learnability, ordered by the quality of feedback available to the learner, from no signal at all to immediate deterministic verification (Table~\ref{tab:hierarchy}).
These levels are not sharp boundaries. Just as phase transitions in combinatorial problems (such as the SAT solvability threshold near clause density $\alpha \approx 4.26$~\citep{xu2000}) turn a binary classification into a probabilistic landscape, a real-world task often sits in a gradient between adjacent levels, and small changes in problem structure can shift it from one regime to another.

\begin{table}[ht]
\centering
\caption{Hierarchy of feedback quality across five learning settings, from fully unobservable to deterministically verifiable.}
\label{tab:hierarchy}
\adjustbox{max width=\textwidth}{%
\begin{tabular}{@{}cp{2.2cm}p{6.5cm}p{2.5cm}p{4cm}@{}}
Level & Feedback quality & What happens & Scaling outcome & Example \\
\midrule
0 & None & Unobservability: different hypotheses produce identical observations. No data helps. & Impossible & The halting problem, fully Goodharted metrics \\[0.8em]
\midrule
1 & Adversarial & Adversariality and reflexivity: distinguishing information exists but the target shifts in response to learning. & Unstable & Gaming a ranking algorithm, adversarial online learning \\[0.8em]
\midrule
2 & Noisy & Stochasticity: hypotheses are statistically distinguishable, but each observation is noisy. & Data-dependent & Image classification, spam detection \\[0.8em]
\midrule
3 & Indirect & One-sided evidence: wrong hypotheses are eventually falsified, but correctness is never confirmed. & Convergent but unconfirmed & Formal language learning, program testing \\[0.8em]
\midrule
4 & Direct & Every output can be immediately and deterministically verified. & Predictable & Type checking, compilation, formal proof verification \\
\end{tabular}}
\end{table}

The hierarchy rests on three formal properties of computational problems, expressibility, computability, and learnability, that characterize different aspects of what machines can and cannot do, and confusing them leads to misdiagnoses of difficulty.
We define the first two properties as building blocks, then integrate the formal definitions of learnability into the hierarchy levels where they apply.

\subsection{Formal Foundations}
\label{sec:three-properties}

\subsubsection{Standing Assumptions}
\label{sec:standing}

All definitions share the following setup.

\begin{itemize}
    \item \textbf{Instance space.}
    Let $X$ be a countable set.
    When $X = \Sigma^*$ for a finite alphabet~$\Sigma$, computability is defined in the standard Turing sense.
    When $X$ is a countable subset of $\mathbb{R}^d$, all computational definitions are understood relative to a fixed computable encoding $\mathrm{enc}\colon X \to \{0,1\}^*$. Computability over richer domains (e.g., all of $\mathbb{R}^d$) requires an effective representation in the sense of Type-2 computability~\citep{weihrauch2000}, which we do not pursue here.

    \item \textbf{Target language.}
    A \emph{language} is a set $L \subseteq X$.
    Its indicator function is $\chi_L\colon X \to \{0,1\}$, with $\chi_L(x) = 1$ iff $x \in L$.

    \item \textbf{Function classes.}
    A \emph{function class} $\mathcal{F} \subseteq \{f\colon X \to \{0,1\}\}$ is a collection of total, binary-valued functions on $X$.
    Since outputs lie in $\{0,1\}$, all functions are bounded. Measurability with respect to any distribution on $X$ holds automatically for the discrete $\sigma$-algebra.

    \item \textbf{Concept class.}
    A \emph{concept class} $\mathcal{C}$ is a collection of languages.
    We write $L \in \mathcal{C}$ to denote membership.
\end{itemize}

\subsubsection{Expressibility}

\begin{definition}[Expressibility]
\label{def:expressibility}
A language $L$ is \emph{expressible} within a function class $\mathcal{F} \subseteq \{f\colon X \to \{0,1\}\}$ if
\begin{equation}
\label{eq:expr}
    \exists\, f \in \mathcal{F} \quad \forall\, x \in X \quad f(x) = \chi_L(x).
\end{equation}
Equivalently, the risk functional
\begin{equation}
\label{eq:risk-expr}
    R_{\mathrm{expr}}(f, L) \;=\; \sup_{x \in X}\, \bigl|f(x) - \chi_L(x)\bigr|
\end{equation}
satisfies $R_{\mathrm{expr}}(f, L) = 0$ for some $f \in \mathcal{F}$.
\end{definition}

Expressibility asks only whether a correct classifier exists in $\mathcal{F}$.
In particular, it imposes no requirement that the classifier be computable or discoverable from data.
The quantifier structure is minimal: $\exists f\;\forall x$.

\begin{remark}[Expressibility is relative]
\label{rem:expr-relative}
Expressibility is always relative to a restricted function class~$\mathcal{F}$.
The unrestricted class $\{f\colon X \to \{0,1\}\}$ trivially contains a correct classifier for every language, including undecidable ones. Without restriction on $\mathcal{F}$, expressibility is vacuous.
The notion becomes informative only when $\mathcal{F}$ is constrained: linear classifiers express half-spaces, deterministic finite automata express regular languages, and neural networks of bounded depth express specific decision regions.
\end{remark}

\begin{remark}[Well-definedness of the supremum]
\label{rem:sup-welldef}
Since $f$ and $\chi_L$ both map into $\{0,1\}$, the difference $|f(x) - \chi_L(x)|$ takes values in $\{0,1\}$, and the supremum in \eqref{eq:risk-expr} is well-defined without additional measurability or boundedness conditions.
\end{remark}

\begin{remark}[Risk as zero--one uniform loss]
\label{rem:risk-zeroone}
For binary-valued functions, $R_{\mathrm{expr}}(f,L) \in \{0,1\}$ and the condition $R_{\mathrm{expr}} = 0$ reduces to pointwise equality $f = \chi_L$.
The risk formulation adds no analytical power beyond existential equality in the binary case. We retain it because it instantiates the unified template of Section~\ref{sec:template}, where the same schema accommodates distributional and sequential risk functionals that do not reduce to exact equality.
\end{remark}

\subsubsection{Computability}

\begin{definition}[Computability]
\label{def:computability}
A language $L$ is \emph{computable} (decidable) if there exists a \emph{total} Turing machine $M$ (one that halts on every input) such that
\begin{equation}
\label{eq:comp}
    \forall\, x \in X, \quad M\bigl(\mathrm{enc}(x)\bigr) = \chi_L(x).
\end{equation}
Equivalently, within the class $\mathcal{M}_{\mathrm{total}}$ of total Turing machines, the risk functional
\begin{equation}
\label{eq:risk-comp}
    R_{\mathrm{comp}}(M, L) \;=\; \sup_{x \in X}\, \bigl|M\bigl(\mathrm{enc}(x)\bigr) - \chi_L(x)\bigr|
\end{equation}
satisfies $R_{\mathrm{comp}}(M, L) = 0$ for some $M \in \mathcal{M}_{\mathrm{total}}$.
\end{definition}

\begin{remark}[Totality is essential]
\label{rem:totality}
The risk $R_{\mathrm{comp}}$ is well-defined only for total machines.
If $M$ does not halt on some input $x_0$, then $M(\mathrm{enc}(x_0))$ is undefined and $R_{\mathrm{comp}}$ cannot be evaluated.
The equivalence ``$L$ is computable iff $R_{\mathrm{comp}}(M,L) = 0$ for some $M$'' holds precisely within $\mathcal{M}_{\mathrm{total}}$.
\end{remark}

Computability asks whether a correct, terminating algorithm exists.
The quantifier structure matches expressibility ($\exists M\;\forall x$) but restricts the mechanism class from arbitrary mathematical functions to effective procedures.
Expressibility quantifies over functions that may exist only as abstract mathematical objects, whereas computability quantifies over procedures that can be executed step by step.

A weaker notion, \emph{recursive enumerability} (r.e.\ or semi-decidability), relaxes totality: $M$ must halt and accept on inputs in $L$ but may run forever on inputs outside $L$.
This asymmetry reappears in Level~3, where wrong hypotheses are eventually falsified but correctness is never confirmed.

The computable languages are strictly contained in the expressible ones ($\textup{Computable} \subsetneq \textup{Expressible}$).
Every total Turing machine is a member of $\{f\colon X \to \{0,1\}\}$, so every computable language is expressible. The halting language $H = \{\langle P \rangle : P \text{ halts on empty input}\}$ witnesses strict containment: $\chi_H$ exists as a mathematical function, but no total Turing machine computes it~\citep{turing1936}.

\medskip

With these formal building blocks in place, we now describe each level of the hierarchy. Levels~0 and~1 are characterized by information-theoretic and strategic obstacles that prevent convergence. Levels~2 and~3 correspond to PAC learnability and generation in the limit, respectively. Level~4 is characterized by deterministic verification.

\subsection{Level 0: No Feedback}

No interaction protocol of any kind can distinguish competing hypotheses.
The obstacle is \emph{information-theoretic indistinguishability}: different hypotheses produce identical observations under every possible interaction, regardless of the protocol.
The halting problem is a canonical case, but unobservability also arises when every available proxy is many-to-one with respect to the true objective.
In a completely Goodharted system~\citep{manheim2019}, every metric the learner observes is compatible with both genuine improvement and strategic gaming.
Recent formalization~\citep{goodhart2024} shows that the tail distribution of proxy--objective discrepancies determines whether proxy over-optimization becomes merely useless or actively harmful. Heavy-tailed misalignment makes strong Goodhart degradation not just possible but probable.

\subsection{Level 1: Adversarial Feedback}

Separating information exists and a sufficiently adaptive protocol could in principle extract it, but the environment actively prevents convergence.
The obstacles are \emph{adversariality} and \emph{reflexivity}: the environment works against the learner, or the target shifts in response to learning.
Specifically, an adversarial environment can delay counterexamples or adapt to the learner's strategy, while reflexivity creates a related pathology in which the learner chases an object that moves because it is being chased.
Platform ecosystems where participants game the ranking algorithm are a reflexive example~\citep{perdomo2020}: the ``correct'' ranking changes precisely because the algorithm is deployed.
Formally, a Level~1 task is one where the data-generating distribution at step $t+1$ depends on the learner's current hypothesis: $D_{t+1} = \mathcal{E}(h_t)$, where $\mathcal{E}$ is an environment response function.
This non-stationarity means convergence guarantees from the PAC framework (which assumes a fixed $D$) do not apply, even when the hypothesis class has finite VC dimension.
PAC convergence relies on the uniform law of large numbers over i.i.d.\ samples~\citep{blumer1989}. When $D_{t+1} = \mathcal{E}(h_t)$, consecutive samples come from different distributions, so the empirical risk $\hat{R}_m(h)$ no longer estimates $R(h, L, D)$ for any single $D$.
Learning under non-stationarity requires additional structural assumptions (e.g., bounded drift or mixing conditions~\citep{besbes2015}) that are absent in the reflexive setting.

\subsection{Level 2: Noisy Feedback}

The obstacle is \emph{stochasticity}: different hypotheses are statistically distinguishable, and the learner can converge in a probabilistic sense, but individual observations are noisy.
The PAC framework~\citep{valiant1984} formalizes this level.

\begin{definition}[PAC learnability]
\label{def:pac}
A concept class $\mathcal{C}$ over $X$ is \emph{PAC-learnable} if there exists a learning algorithm $A$ and a hypothesis class $\mathcal{H}$ of measurable functions $X \to \{0,1\}$ (with $\mathcal{H} \supseteq \mathcal{C}$, see Remark~\ref{rem:realizable}) such that for every $L \in \mathcal{C}$, every probability distribution $D$ over $X$, and every $\varepsilon, \delta > 0$, there exists $m = \mathrm{poly}(1/\varepsilon, 1/\delta)$ such that, given $m$ i.i.d.\ samples $S \sim D^m$, the algorithm outputs a hypothesis $h = A(S) \in \mathcal{H}$ satisfying
\begin{equation}
\label{eq:pac}
    \Pr_{S \sim D^m}\!\Bigl[\,R_{\mathrm{PAC}}(h, L, D) \leq \varepsilon\,\Bigr] \;\geq\; 1 - \delta,
\end{equation}
where the risk functional is
\begin{equation}
\label{eq:risk-pac}
    R_{\mathrm{PAC}}(h, L, D) \;=\; \Pr_{x \sim D}\bigl[h(x) \neq \chi_L(x)\bigr].
\end{equation}
\end{definition}

\begin{remark}[Realizable case]
\label{rem:realizable}
Definition~\ref{def:pac} operates under the \emph{realizable} assumption: the hypothesis class contains a perfect classifier ($\mathcal{H} \supseteq \mathcal{C}$), so zero error is achievable in principle.
The \emph{agnostic} setting drops this assumption and minimizes over $\mathcal{H}$ without guaranteeing zero risk. Agnostic learnability is strictly harder and requires different sample complexity bounds~\citep{shalev2014}.
We adopt the realizable case throughout because our focus is on structural barriers to learnability, which are present even under the most favorable assumptions.
\end{remark}

\begin{remark}[Improper learning]
\label{rem:improper}
Definition~\ref{def:pac} permits \emph{improper} learning: the output hypothesis $h$ may lie in $\mathcal{H} \supseteq \mathcal{C}$ rather than in $\mathcal{C}$ itself.
Proper learning (requiring $h \in \mathcal{C}$) is strictly stronger and computationally harder for some concept classes~\citep{shalev2014}.
\end{remark}

\begin{remark}[Sample complexity and VC dimension]
\label{rem:vc-sample}
The polynomial bound $m = \mathrm{poly}(1/\varepsilon, 1/\delta)$ in Definition~\ref{def:pac} hides a critical structural parameter.
The fundamental theorem of PAC learning~\citep{blumer1989,shalev2010} establishes that a concept class is PAC-learnable if and only if its VC dimension $d = \mathrm{VC}(\mathcal{H})$ is finite, in which case the sample complexity is
\[
    m = O\!\left(\frac{d + \log(1/\delta)}{\varepsilon}\right).
\]
When $d = \infty$, no finite sample suffices for uniform convergence, and PAC learnability fails.
The VC dimension thus mediates the connection between expressibility and learnability formalized in Section~\ref{sec:expressibility-hurts}.
\end{remark}

\begin{remark}[Measurability]
\label{rem:measurability}
The probability in \eqref{eq:risk-pac} requires that $\{x : h(x) \neq \chi_L(x)\}$ be $D$-measurable.
Since $h$ and $\chi_L$ are $\{0,1\}$-valued, measurability holds whenever $\mathcal{H}$ consists of measurable functions with respect to the $\sigma$-algebra on which $D$ is defined, which we assume throughout.
\end{remark}

\begin{remark}[PAC learnability implies computability of evaluation]
\label{rem:pac-comp}
PAC learnability requires that $A$ be a total computable procedure (one that halts on every input) and that each hypothesis $h = A(S)$ be evaluable on new inputs.
Therefore, PAC learnability entails that hypotheses output by the learner are computable procedures. Each such $h$ approximates $L$ to within $\varepsilon$ error under the distribution $D$, but need not decide $L$ exactly.
The converse fails: computable languages need not be PAC-learnable (Section~\ref{sec:pairwise}).
\end{remark}

Distinguishing a coin that lands heads 49\% of the time from one at 51\% requires thousands of flips, but it is always possible, and most conventional machine learning operates at this level.
Stochasticity raises the price of learning but does not destroy learnability.

\subsection{Level 3: Indirect Feedback}

The obstacle is \emph{one-sidedness}: the learner receives real information, but only from one side.
Positive evidence accumulates, but explicit correction never arrives.
A wrong hypothesis is not immediately flagged, but it cannot survive indefinitely: when contradicting evidence appears, it provides unambiguous falsification.
The central asymmetry is that correctness is never confirmed, so the learner improves monotonically but can never know it has converged.

The classical theory of language identification asks the stronger question: can the learner eventually identify which language it is observing?
\citet{gold1967} shows this is impossible for most language classes from positive examples alone.
Without negative examples, the learner cannot distinguish between two languages where one is a subset of the other.

However, \citet{kleinberg2024} reframe the question.
Instead of identification, they ask whether the learner can generate valid new strings indefinitely.
Generation succeeds for strictly larger classes of languages than identification does, because generation requires only that the learner stay within some infinite safe subset of the target language, whereas identification requires characterizing the language's complete boundary.
In practice, a code generation model does not need to recover the complete grammar of Python but only to keep producing valid programs, a task with strictly weaker requirements.

\begin{definition}[Generation in the limit]
\label{def:gen}
Let $\mathcal{C}$ be a concept class of \emph{infinite} languages.
A generator $G\colon X^{<\infty} \to X$, where $X^{<\infty} = \bigcup_{n=1}^{\infty} X^n$, \emph{succeeds} on $\mathcal{C}$ if, for every $L \in \mathcal{C}$ and every enumeration $\sigma = (x_1, x_2, \ldots)$ satisfying $\{x_n : n \in \mathbb{N}\} = L$, there exists $N \in \mathbb{N}$ such that for all $n \geq N$:
\begin{align}
\label{eq:gen-valid}
    G(x_1, \ldots, x_n) &\in L \qquad\text{(validity),} \\
\label{eq:gen-novel}
    G(x_1, \ldots, x_n) &\notin \{x_1, \ldots, x_n\} \qquad\text{(novelty).}
\end{align}
The risk functional captures \emph{validity} only:
\begin{equation}
\label{eq:risk-gen}
    R_{\mathrm{gen}}(G, L, \sigma) \;=\; \limsup_{n \to \infty}\; \indicator\bigl\{G(\sigma_{\leq n}) \notin L\bigr\}.
\end{equation}
Success requires both $R_{\mathrm{gen}} = 0$ (eventual perpetual validity) and eventual perpetual novelty.
\end{definition}

\begin{remark}[Infinite language requirement]
\label{rem:infinite}
The novelty condition~\eqref{eq:gen-novel} requires that $L$ be infinite.
If $L$ is finite, then after all elements of $L$ have appeared in the enumeration, no novel valid output exists.
We restrict $\mathcal{C}$ to infinite languages throughout.
\end{remark}

\begin{remark}[Validity versus productivity]
\label{rem:validity-productivity}
$R_{\mathrm{gen}}$ captures whether the generator eventually stays within $L$ (semantic convergence).
Novelty is a separate structural constraint enforcing nontrivial generativity (productivity).
We separate the two because they fail for different reasons: a generator may produce only valid outputs but repeat itself (validity without productivity), or produce novel outputs that eventually leave $L$ (productivity without validity).
For completeness, one can define a companion novelty risk:
\begin{equation}
\label{eq:risk-nov}
    R_{\mathrm{nov}}(G, L, \sigma) \;=\; \limsup_{n \to \infty}\; \indicator\bigl\{G(\sigma_{\leq n}) \in \{x_1, \ldots, x_n\}\bigr\}.
\end{equation}
Full success then requires both $R_{\mathrm{gen}} = 0$ and $R_{\mathrm{nov}} = 0$.
We retain $R_{\mathrm{gen}}$ alone in the unified template (Table~\ref{tab:unified}) because validity and novelty are structurally independent constraints with distinct failure modes, and the template is designed to capture the risk of incorrect outputs rather than the productivity of the mechanism.
\end{remark}

\begin{remark}[Strength of universal quantification over enumerations]
\label{rem:adversarial-enum}
Definition~\ref{def:gen} quantifies over \emph{every} enumeration of $L$, including adversarially ordered ones.
The generator receives no structural guarantee about the order in which elements arrive.
This corresponds to Gold-style learning under arbitrary presentation~\citep{gold1967}, adapted to the generation setting by \citet{kleinberg2024}.
The requirement is correspondingly strong: the generator must succeed regardless of how the environment sequences its data.
\end{remark}

\begin{remark}[Sequence semantics]
\label{rem:sequence}
The index $n$ in Definition~\ref{def:gen} is a sequence position in $\mathbb{N}$, not a physical time step.
The generator operates on finite prefixes $\sigma_{\leq n} = (x_1, \ldots, x_n)$ of infinite sequences.
``In the limit'' is a statement about convergence over sequence prefixes, not about the passage of physical time.
\end{remark}

\begin{definition}[Identification in the limit]
\label{def:ident}
Let $\{L_i\}_{i \in \mathbb{N}}$ be an \emph{effective} indexing of $\mathcal{C}$: the membership predicate $x \in L_i$ is uniformly decidable given $i$ and $x$ (i.e., there exists a total Turing machine that on input $(i, x)$ outputs $\chi_{L_i}(x)$).
An identifier $A$ \emph{succeeds} on $\mathcal{C}$ if, for every $L \in \mathcal{C}$ and every enumeration $\sigma$ of $L$, there exists $N$ such that for all $n \geq N$,
\begin{equation}
    A(x_1, \ldots, x_n) = i \quad \text{where} \quad L_i = L.
\end{equation}
The identifier must eventually lock onto the correct language index and never change.
\end{definition}

Identification is strictly stronger than generation: every concept class identifiable in the limit is also generable, but the converse fails.
An identifier that has converged to the correct index can be converted into a generator by enumerating novel elements of the identified language.
For the separation, \citet{gold1967} shows that superfinite classes (those containing all finite languages and at least one infinite language) are not identifiable from positive data, yet \citet{kleinberg2024} exhibit superfinite classes that are generable in the limit.

\subsection{Level 4: Direct Feedback}

At this level, no information-theoretic obstacle remains, because every output can be immediately and deterministically judged correct or incorrect.
Type checking, compilation, and formal theorem verification~\citep{demoura2021} operate at this level.
Level~4 exploits a deep asymmetry: verification is cheap while search is hard.
A type checker confirms correctness in linear time; finding a type-correct program may require exploring an exponential space.
This is the P versus NP gap working in the learner's favor: the learner inherits the easy side of the asymmetry.
Level~4 does not imply greater intelligence on the learner's part but rather a stronger verification structure in the task.

\subsection{Code Generation Across Levels}

Code generation illustrates how a single task can draw on multiple levels.
During training, the model learns from a stream of valid programs without explicit negative examples, a Level~3 information structure.
But code's verification infrastructure (compilers, type checkers, test suites) provides Level~4 feedback that leaks into the learning process: training data is filtered by compilation, reward signals come from test results, and error diagnostics point to specific failures.
Unlike most ML domains where training data is passively observed, code permits \emph{active experimentation}: the learner (or its training pipeline) can execute a candidate program and observe whether it passes, producing feedback that is causally tied to the specific output rather than merely correlated with it.
Most ML training data provides only observational signal ($P(Y \mid X)$); code's execution infrastructure provides a form of experimental signal in which the outcome is deterministically linked to the input, which is why its feedback is so much more informative.
This combination (a Level~3 learning problem scaffolded by Level~4 verification with interventional feedback) is what makes code generation tractable.
RL on code can access the same Level~4 verifiers, but it collapses the rich diagnostic signal into a single scalar reward, losing the density and locality that make supervised learning effective.
Moreover, most learning tasks have no such scaffold.

Code generation models do not ``understand'' their programming languages in any complete sense.
Indeed, they cannot reliably decide whether an arbitrary program is valid.
Yet they stably generate valid programs, drawing on patterns absorbed from training data.
This behavior reflects not a failure of understanding but a different, achievable kind of competence, determined by the information structure of the task.

The purpose of the hierarchy is diagnostic: when a learning problem fails, the first question should not be ``do we need a bigger model?'' but ``what kind of feedback does the learner actually receive?''
For example, a Level~0 problem cannot be solved by scaling, and a Level~1 problem cannot be solved by better optimizers.
Recognizing the level is therefore the prerequisite for choosing the right response.

Crucially, the classification can be performed \emph{before} scaling is attempted.
Four observable properties, checkable from task specifications alone, determine the level:
(i)~whether any interaction protocol can distinguish competing hypotheses (if not, Level~0);
(ii)~whether the data distribution is stationary or shifts in response to the learner (non-stationarity implies Level~1);
(iii)~whether both confirming and disconfirming evidence are available (symmetric feedback indicates Level~2; one-sided positive evidence indicates Level~3);
(iv)~whether every candidate output can be immediately and deterministically verified (if so, Level~4).
For instance, automated theorem proving in Lean~4~\citep{demoura2021} satisfies all four criteria for Level~4 before any model is trained: proof candidates are distinguishable, the proof-checking rules are fixed, the type checker provides both acceptance and rejection, and verification is deterministic.
The hierarchy thus predicts that scaling should yield reliable progress on theorem proving, a prediction consistent with recent empirical results.

\subsection{Quantifier Depth}
\label{sec:quantifiers}

The quantifier structure deepens across definitions, and the depth reflects the adversarial robustness required of the mechanism (Table~\ref{tab:quantifiers}).

\begin{table}[t]
\centering
\caption{Quantifier alternation structure and depth for each property in the learnability hierarchy.}
\label{tab:quantifiers}
\adjustbox{max width=\textwidth}{%
\begin{tabular}{@{}lll@{}}
Property & Quantifier structure & Alternation depth \\
\midrule
Expressibility & $\exists f\;\forall x$ & 2 \\
\midrule
Computability & $\exists M\;\forall x$ \;(+ totality) & 2 (+ constraint) \\
\midrule
PAC learnability & $\exists A\;\forall L\;\forall D\;\forall \varepsilon,\delta\;\exists m$ & 4 \\
\midrule
Generation in the limit & $\exists G\;\forall L\;\forall \sigma\;\exists N\;\forall n \geq N$ & 5 \\
\midrule
Identification in the limit & $\exists A\;\forall L\;\forall \sigma\;\exists N\;\forall n \geq N$ & 5 \\
\end{tabular}}
\end{table}

Each additional quantifier alternation represents an additional degree of adversarial choice the mechanism must handle.
At the shallowest level, expressibility must work for all instances but faces no adversary choosing instances adaptively.
PAC learnability, in turn, must work for all target concepts, all distributions, and all accuracy requirements.
At the deepest level, generation and identification must work for all targets and all orderings of the data, including adversarial ones.

The properties also differ in what counts as an adversary and how success is quantified (Table~\ref{tab:regimes}).

\begin{table}[bp]
\centering
\caption{How success is measured and what adversary each property must withstand.}
\label{tab:regimes}
\adjustbox{max width=\textwidth}{%
\begin{tabular}{@{}llll@{}}
Property & Success quantification & Adversary & Error model \\
\midrule
Expressibility & $\forall x$ (pointwise) & None & Exact (zero--one) \\
\midrule
Computability & $\forall x$ (pointwise) & None & Exact (zero--one) \\
\midrule
PAC learnability & $\Pr_{x \sim D}$ (distributional) & Distribution $D$ & Probabilistic ($\varepsilon, \delta$) \\
\midrule
Generation & $\forall \sigma$ (worst-case ordering) & Adversarial enumeration & Asymptotic ($\limsup$) \\
\midrule
Identification & $\forall \sigma$ (worst-case ordering) & Adversarial enumeration & Asymptotic (convergence) \\
\end{tabular}}
\end{table}

Expressibility and computability share pointwise quantification with no adversary.
PAC learnability, by contrast, introduces a distributional adversary: success must hold for \emph{every} distribution, but error is measured probabilistically under each fixed distribution.
Generation and identification face the strongest adversary of all: success must hold under every ordering of the data, including worst-case orderings designed to delay convergence.

\begin{remark}[Connection to the arithmetical hierarchy]
\label{rem:arithmetical}
The pattern of increasing quantifier alternation depth mirrors the arithmetical hierarchy in computability theory. In that hierarchy, $\Sigma^0_n$ and $\Pi^0_n$ sets are classified by the number of quantifier alternations in their defining formulas, and each additional alternation yields a strictly larger class of sets~\citep{turing1936}.
The analogy is structural rather than formal: our properties are not defined by arithmetical formulas over $\mathbb{N}$, but the principle that deeper quantifier alternation corresponds to strictly harder problems applies in both settings.
\end{remark}

\subsection{A Unified Template}
\label{sec:template}

All properties instantiate a single schema.
Each asks whether there exists a mechanism $\Phi$ in a mechanism class $\mathcal{M}$ such that a risk functional vanishes:
\begin{equation}
\label{eq:template}
    \exists\, \Phi \in \mathcal{M} \quad \text{such that} \quad R(\Phi, L) = 0.
\end{equation}
The properties differ in what $\mathcal{M}$ contains and what $R$ measures (Table~\ref{tab:unified}).
In the PAC and generation settings, the risk functional implicitly absorbs universal quantification over distributions or enumerations, respectively, as specified in Table~\ref{tab:quantifiers}. The template captures the common existential--risk structure while the full quantifier depth is recorded separately.

\begin{table}[t]
\centering
\caption{Unified template: each property as an instance of Equation~\eqref{eq:template}, differing only in mechanism class $\mathcal{M}$ and risk functional $R$.}
\label{tab:unified}
\adjustbox{max width=\textwidth}{%
\begin{tabular}{@{}llll@{}}
Property & Mechanism class $\mathcal{M}$ & Risk functional $R(\Phi, L)$ & Quantifiers \\
\midrule
Expressibility & $\mathcal{F} \subseteq \{f\colon X \to \{0,1\}\}$ & $\displaystyle\sup_{x \in X} \bigl|f(x) - \chi_L(x)\bigr|$ & $\exists f\; \forall x$ \\[1em]
\midrule
Computability & $\mathcal{M}_{\mathrm{total}}$ (total TMs) & $\displaystyle\sup_{x \in X} \bigl|M(\mathrm{enc}(x)) - \chi_L(x)\bigr|$ & $\exists M\; \forall x$ \\[1em]
\midrule
PAC learnability & Algorithms $A \to \mathcal{H}$ & $\Pr_{x \sim D}\bigl[h(x) \neq \chi_L(x)\bigr]$, $h = A(S)$ & $\exists A\; \forall D\; \forall \varepsilon,\delta\; \exists m$ \\[1em]
\midrule
Generation & $G\colon X^{<\infty} \to X$ & $\displaystyle\limsup_{n \to \infty} \indicator\{G(\sigma_{\leq n}) \notin L\}$ & $\exists G\; \forall L\; \forall \sigma\; \exists N\; \forall n \geq N$ \\
\end{tabular}}
\end{table}

The risk functionals differ along three axes (Table~\ref{tab:riskaxes}).

\begin{table}[bp]
\centering
\caption{Risk functionals compared along three axes: domain of supremum, probabilistic measurement, and sequential dependence.}
\label{tab:riskaxes}
\adjustbox{max width=\textwidth}{%
\begin{tabular}{@{}llcc@{}}
Risk type & Supremum/limit taken over & Probabilistic? & Sequential? \\
\midrule
$R_{\mathrm{expr}}$ & Entire instance space $X$ & No & No \\
\midrule
$R_{\mathrm{comp}}$ & Entire instance space $X$ & No & No \\
\midrule
$R_{\mathrm{PAC}}$ & Distribution $D$ & Yes & No \\
\midrule
$R_{\mathrm{gen}}$ & Sequence index $n$ & No & Yes \\
\end{tabular}}
\end{table}

Expressibility and computability share the same pointwise-supremum risk, differing only in the mechanism class (computability restricts to total Turing machines).
PAC learnability replaces the pointwise supremum with a distributional expectation, and generation replaces it with a sequence limit ($\limsup$).
The quantifier structure deepens accordingly. Expressibility requires only that a correct function exist. Computability adds that the function must be implementable as a halting program. Learnability adds that the algorithm must succeed across all targets, all data presentations, and must eventually converge.

\subsection{Pairwise Relationships}
\label{sec:pairwise}

The three properties are not fully independent: PAC learnability implies computability of hypothesis evaluation (Remark~\ref{rem:pac-comp}).
However, no other implication holds, and no containment direction reverses.

\begin{proposition}[Separations]
\label{prop:separation}
\leavevmode
\begin{enumerate}[label=\textup{(\alph*)}]
    \item \textbf{Expressible but not computable.}
    The halting language $H$ is expressible ($\chi_H$ exists as a mathematical function) but not computable~\citep{turing1936}.

    \item \textbf{Computable but not efficiently PAC-learnable.}
    Under standard cryptographic hardness assumptions, the AES encryption function is computable in polynomial time given the key, but no polynomial-time algorithm can PAC-learn the input--output mapping from samples without the key. The learner observes only plaintext--ciphertext pairs for a fixed but unknown key and does not receive the key as part of the input.
    Here \emph{efficient} PAC learnability requires polynomial time in both sample size $m$ and instance dimension. In the information-theoretic sense (unbounded computation), the separation does not hold.
    \citet{jiang2025epiplexity} make this separation precise: cryptographically secure pseudorandom generators have nearly maximal time-bounded entropy (they are indistinguishable from random to any polynomial-time observer) but negligible epiplexity (essentially no learnable structure exists to extract).
    Many computable functions (pseudorandom generators, cryptographic hashes) share this property.

    \item \textbf{Expressible but not PAC-learnable.}
    Any hypothesis class $\mathcal{F}$ with infinite VC dimension~\citep{blumer1989} can shatter arbitrarily large finite sets: for any finite sample and any labeling of that sample, some $f \in \mathcal{F}$ is consistent with it.
    This prevents uniform convergence and destroys PAC learnability under distribution-free assumptions, where the learner must succeed for every distribution $D$. Finite data cannot eliminate the infinitely many functions that agree on all observed inputs yet diverge on unseen ones.

    \item \textbf{Generable but not identifiable.}
    \citet{kleinberg2024} exhibit concept classes where generation in the limit succeeds but identification in the limit fails.
    \citet{charikar2024} and, independently, \citet{li2025generation} prove that every countable language collection admits non-uniform generation; the validity--breadth tradeoff is also established concurrently by~\citet{kalavasis2025limits}.

    \item \textbf{PAC-learnable implies computable evaluation.}
    If $\mathcal{C}$ is PAC-learnable by algorithm $A$, then for every $L \in \mathcal{C}$, the hypothesis $h = A(S)$ is a computable procedure that approximates membership in $L$ (to within $\varepsilon$ error under $D$).
    PAC learnability thus entails that the learner's output hypotheses are computable, though they need not decide $L$ exactly.
    The converse fails by (b).
\end{enumerate}
\end{proposition}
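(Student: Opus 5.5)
The plan is to prove the five parts separately. Each part reduces either to a classical result cited earlier in the paper or to a short construction checked against Definitions~\ref{def:expressibility}--\ref{def:ident}.

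For (a), we take $\mathcal{F}$ to be any class containing $\chi_H$; the unrestricted class works, though Remark~\ref{rem:expr-relative} warns that this is vacuous. Then $R_{\mathrm{expr}}(\chi_H,H)=0$ trivially, so $H$ is expressible. Non-computability is Turing's diagonal argument. Suppose a total $M$ decided $H$. Build $P$ that runs $M$ on $\langle P\rangle$ (via the recursion theorem) and halts iff $M$ rejects. This contradicts $R_{\mathrm{comp}}(M,H)=0$.

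For (e), we use Remark~\ref{rem:pac-comp}. The learner $A$ is a total computable procedure, and its output $h=A(S)$ is by assumption an evaluable object. Composing $A$ with a universal evaluator gives a total machine computing $h$. The PAC guarantee \eqref{eq:pac} gives only $R_{\mathrm{PAC}}(h,L,D)\le\varepsilon$, not $R_{\mathrm{comp}}=0$. The failure of the converse is deferred to (b).

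For (c), we use the contrapositive of the fundamental theorem in Remark~\ref{rem:vc-sample}. Take $\mathcal{C}=\mathcal{F}$ with $\mathrm{VC}(\mathcal{F})=\infty$; every $L$ in it is expressible by definition. Given any $m$, pick a shattered set $T$ with $|T|=2m$ and let $D$ be uniform on $T$. A standard averaging argument over a uniformly random target labeling of $T$ shows that any learner errs on at least half of the unseen points in expectation. This gives $\mathbb{E}\,R_{\mathrm{PAC}}\ge 1/4$, so \eqref{eq:pac} fails for small $\varepsilon,\delta$.

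For (d), we instantiate a superfinite class containing only infinite languages, in line with Remark~\ref{rem:infinite}. An example is $\{\mathbb{N}\}\cup\{F\cup\{n: n\ge k\}\}$, or Kleinberg--Mullainathan's general countable-class theorem applied to a Gold-nonidentifiable class. Non-identifiability then follows from Gold's locking-sequence argument. Generability follows from Kleinberg--Mullainathan's algorithm, which outputs from the intersection of the consistent languages of "critical" index.

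Part (b) is the main obstacle, because it is inherently conditional and needs a precise learning formulation. The plan is to replace AES, whose asymptotics are ill-defined because it has a fixed block size, by a pseudorandom function family $\{f_k\}$, with a concept class of predicates $x\mapsto$ (first bit of $f_k(x)$). Each concept is computable in polynomial time given $k$. Suppose a polynomial-time learner achieved error $\le 1/4$ with probability $\ge 1/2$ under the uniform distribution. We turn it into a distinguisher. Query the oracle on random points, train the learner, then test its prediction on a fresh random point. Against $f_k$ the prediction is correct with probability about $3/4$, but against a truly random function only about $1/2$. This non-negligible gap contradicts pseudorandomness. We finally note that with unbounded computation the class has finite size $2^{|k|}$, so it is PAC-learnable by Occam/ERM. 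This confirms the caveat that the separation is purely computational.
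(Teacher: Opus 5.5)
For (a), (c) and (e) you follow the paper's own justifications, which are only citation-level: Turing's diagonalization, the shattering argument behind the fundamental theorem, and Remark~\ref{rem:pac-comp}. In (c) you make the shattering argument explicit via the no-free-lunch bound $\mathbb{E}\,R_{\mathrm{PAC}}\ge 1/4$ on a shattered set of size $2m$.

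For (b) and (d) your route is genuinely different and more rigorous. The paper only asserts the AES claim, and AES is a fixed-size function, so polynomial time has no asymptotic content for it. The paper then supports the claim with the epiplexity citation rather than a reduction. Your pseudorandom-function family with a learner-to-distinguisher reduction is the classical argument, and it proves a genuinely asymptotic statement. Likewise, your cofinite-set example for (d) avoids a tension in the paper's witness. Superfinite classes contain finite languages, so they fall outside Definition~\ref{def:gen}, which admits only infinite languages.

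Three steps need repair.

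\textbf{(b).} The numbers fail as written. Suppose the learner guarantees error at most $1/4$ only with probability $1/2$. Then a single fresh prediction is correct with probability at least $(1-\delta)(1-\varepsilon)=3/8$, not about $3/4$. A learner with error $3/4$ on its failure event would make this exactly $1/2$, giving no distinguishing advantage. Definition~\ref{def:pac} quantifies over all $\varepsilon,\delta$, so there are two easy fixes. You can run the learner with $\varepsilon=\delta=1/8$: success is then at least $49/64$, against at most $1/2+m2^{-n}$ for a truly random function. Alternatively, keep $\delta=1/2$, estimate the error of $h$ on polynomially many fresh points, and threshold at $1/3$.

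\textbf{(d).} The cofinite class is not superfinite; no class of infinite languages can be. So drop that label and spell out the locking-sequence argument. Take a locking sequence $\sigma$ for $\mathbb{N}$ and an $n$ not occurring in $\sigma$. Then $\sigma$ followed by an enumeration of $\mathbb{N}\setminus\{n\}$ is a text for a member of the class, and on it the learner never leaves its $\mathbb{N}$-index. Generability is immediate without invoking Kleinberg--Mullainathan: output one more than the largest element seen so far.

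\textbf{(e).} Deferring the failure of the converse to (b) refutes it only for \emph{efficient} learnability. Your own closing remark in (b) shows that, with unbounded computation, the PRF class is PAC-learnable, since each slice is finite. Definition~\ref{def:pac} imposes no time bound. For that notion, use Lemma~\ref{lem:vc-computable} together with (c) instead: $\mathcal{F}_{\mathrm{comp}}$ consists of computable predicates yet is not PAC-learnable.
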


These relationships are summarized in Table~\ref{tab:pairwise}.

\begin{table}[t]
\centering
\caption{Pairwise relationships among the five properties, with separating witnesses.}
\label{tab:pairwise}
\adjustbox{max width=\textwidth}{%
\begin{tabular}{@{}lll@{}}
Pair & Relationship & Witness \\
\midrule
Expr $\not\Rightarrow$ Comp & $\text{Comp} \subsetneq \text{Expr}$ & $\chi_H$ exists but no TM computes it \\
\midrule
Comp $\not\Rightarrow$ Eff.\ PAC-Learn & Neither contains the other & AES (efficient PAC, cryptographic assumptions) \\
\midrule
Expr $\not\Rightarrow$ PAC-Learn & Independent & Infinite-VC-dimension classes \\
\midrule
PAC-Learn $\Rightarrow$ Comp (eval) & One direction holds & Algorithm and hypothesis must be computable \\
\midrule
Gen $\not\Rightarrow$ Ident & $\text{Ident} \subsetneq \text{Gen}$ & Superfinite classes~\citep{kleinberg2024} \\
\end{tabular}}
\end{table}

\begin{remark}[Qualified independence]
\label{rem:qualified}
The claim that these three properties are ``logically independent'' requires qualification.
PAC learnability implies computability of hypothesis evaluation, so the two are not fully independent.
The precise statement is: expressibility does not imply computability, computability does not imply learnability, and no containment direction reverses.
Note that PAC learnability \emph{presupposes} expressibility within the hypothesis class $\mathcal{H}$ (the realizable assumption $\mathcal{H} \supseteq \mathcal{C}$), so the two are not independent in the sense that learnability requires a representational foundation. What fails is the reverse: expressibility in a rich class does not imply learnability of that class.
\end{remark}

The distinction also separates verification from discovery.
Verification is a static, closed-world operation in which someone provides the answer and a checker confirms it.
By contrast, discovery is generative and open-world, requiring the answer to be found through observation and inference.
The ability to verify therefore does not imply the ability to discover, just as the ability to compute does not imply the ability to learn.

\section{Why Expressibility Hurts Learnability}
\label{sec:expressibility-hurts}

Learnability differs from computability, but it differs even more sharply from expressibility.
The relationship is often inverse: increasing expressivity enlarges hypothesis complexity, which in turn increases sample complexity and can destroy distribution-free learnability guarantees.

\subsection{The Expressibility Trap}

Lambda calculus~\citep{church1936} supports higher-order functions, self-reference, and can express any computable process.
This leads to a natural but wrong inference: if the system can represent everything, learning should be easier, because the hypothesis space is guaranteed to contain the right answer.

In fact, the opposite holds.
Any hypothesis class with $\mathrm{VC}(\mathcal{H}) = \infty$ is not PAC-learnable under distribution-free assumptions~\citep{blumer1989}.
Over any infinite instance space $X$, the class of all computable functions $X \to \{0,1\}$ has infinite VC dimension (Lemma~\ref{lem:vc-computable}) and is therefore not PAC-learnable, despite containing a correct classifier for every computable language.

\begin{lemma}[VC dimension of computable functions]
\label{lem:vc-computable}
Over any infinite instance space $X$, the class $\mathcal{F}_{\mathrm{comp}} = \{f\colon X \to \{0,1\} \mid f \text{ is computable}\}$ has infinite VC dimension.
\end{lemma}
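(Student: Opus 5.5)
To prove Lemma~\ref{lem:vc-computable}, I will show directly that $\mathcal{F}_{\mathrm{comp}}$ shatters a set of size $n$ for every $n \in \mathbb{N}$. By the definition of VC dimension, this forces $\mathrm{VC}(\mathcal{F}_{\mathrm{comp}}) = \infty$. The construction uses only the standing assumptions of Section~\ref{sec:standing}: $X$ is countable, and computability is taken relative to the fixed computable encoding $\mathrm{enc}$.

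\emph{Step 1 (choosing the points).} Fix $n$. Since $X$ is infinite, we can pick $n$ distinct points $x_1, \ldots, x_n \in X$. Let $w_i = \mathrm{enc}(x_i)$. Because $\mathrm{enc}$ is an encoding, it is injective, so the codes $w_1, \ldots, w_n$ are pairwise distinct finite binary strings.

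\emph{Step 2 (realizing every labeling).} Take any labeling $b = (b_1, \ldots, b_n) \in \{0,1\}^n$. I define a Turing machine $M_b$ with the finite table $\{(w_i, b_i)\}_{i=1}^n$ hard-wired into its finite control. On input $w$, the machine compares $w$ against each $w_i$ in turn. If it finds a match, it outputs $b_i$; otherwise it outputs $0$.

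\emph{Step 3 (checking the machine).} Each string comparison is a finite procedure, and there are only finitely many of them. Hence $M_b$ halts on every input, so $M_b \in \mathcal{M}_{\mathrm{total}}$. Now set $f_b(x) = M_b(\mathrm{enc}(x))$. This is a total computable function $X \to \{0,1\}$ in the sense of Definition~\ref{def:computability}, so $f_b \in \mathcal{F}_{\mathrm{comp}}$, and it satisfies $f_b(x_i) = b_i$ for every $i$. Since $b$ was arbitrary, all $2^n$ labelings of $\{x_1, \ldots, x_n\}$ are realized, so the set is shattered. As $n$ was arbitrary, the VC dimension is infinite.

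\emph{Main obstacle.} The only real subtlety is making sure the shattering functions are genuinely computable as functions on $X$, not merely on their codes. This is exactly why I route through $\mathrm{enc}$ and use injectivity to keep the codes distinct. It also matters that a finite lookup table can be hard-wired into a machine; this is uniform in $b$, but uniformity is not even required, since for each labeling we only need the existence of some machine.

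When $X = \Sigma^*$, no encoding step is needed and the argument is immediate. The remark on Type-2 computability does not arise, because $X$ is assumed countable with a fixed encoding. The consequence claimed in the text then follows from Remark~\ref{rem:vc-sample}: a class of infinite VC dimension is not PAC-learnable.
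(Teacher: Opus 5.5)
Your proof is correct and takes essentially the same approach as the paper: realize each labeling of a finite set by a finite lookup table with default output $0$, which is total and computable, so arbitrarily large finite sets are shattered. Routing through the injective encoding $\mathrm{enc}$ and an explicit total machine $M_b$ spells out the computability check that the paper asserts in one line.
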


\begin{proof}
Let $S = \{x_1, \ldots, x_n\} \subseteq X$ be any finite subset of size $n$.
For each labeling $b = (b_1, \ldots, b_n) \in \{0,1\}^n$, define $f_b(x) = b_i$ if $x = x_i$ for some $i$, and $f_b(x) = 0$ otherwise.
Each $f_b$ is computable (it is a finite lookup table with a default output).
Since $\mathcal{F}_{\mathrm{comp}}$ shatters every finite subset of $X$, $\mathrm{VC}(\mathcal{F}_{\mathrm{comp}}) \geq n$ for all $n$, so $\mathrm{VC}(\mathcal{F}_{\mathrm{comp}}) = \infty$.
\end{proof}

When the VC dimension is finite, a learner can narrow down the correct hypothesis with data proportional to $\mathrm{VC}(\mathcal{H})/\varepsilon$ (Remark~\ref{rem:vc-sample}).
However, the hypothesis class of all computable functions has infinite VC dimension.
In this regime, infinitely many functions agree on every observed input but diverge on unseen ones: two programs may pass every test case yet produce completely different outputs on the next input.
The hypothesis space is simply too rich for finite evidence to constrain it.
This tradeoff appears in modern architectures: recent analysis of Transformers~\citep{transformers2025} shows that increasing positional expressibility can require additional layers, raising sample complexity.
In short, richer representation requires more data to learn.

Furthermore, self-reference exacerbates the problem.
In standard learning theory, progress is monotonic: once an observation rules out a candidate, that candidate stays ruled out.
But when the environment is reflexive (when the system being learned can react to the learner's hypothesis), the target itself shifts.
This is not hypothesis mutation, the candidates are fixed functions.
Rather, the problem is that the environment's future behavior depends on the learner's current state, so a hypothesis consistent with all past observations may become inconsistent with future ones generated in response to it.
\citet{wang2025} prove this formally: learning succeeds only when the family of reachable models has a ceiling on its complexity. Remove that ceiling, and previously learnable tasks become provably unlearnable.
Turing completeness defines the upper bound of what a system can express, but it does not define the lower bound of what a system can learn.

\subsection{Why Models Succeed Despite Infinite VC Dimension}

Models routinely succeed on tasks whose hypothesis spaces have infinite VC dimension.
The Universal Approximation Theorem guarantees that neural networks of sufficient width can approximate any continuous function on a compact domain to arbitrary precision, and on finite input domains, any function whatsoever can be fitted.
The theoretical question is therefore never whether a network can \emph{represent} the answer, but whether it can \emph{learn} the answer from available data.
Expressivity alone is not the bottleneck; learnability is the binding constraint.

The resolution is that real-world data does not fill the full theoretical space.
As the manifold hypothesis~\citep{fefferman2016} suggests, valid programs, natural images, and coherent text occupy a tiny, structured submanifold within the space of all possible inputs. Empirical analysis of generative models~\citep{manifold2024} confirms that model performance corresponds to the geometry of these low-dimensional data manifolds.
Training does not store individual examples; it compresses the manifold's geometric structure into the curvature of the parameter space.
A ResNet-50 with 25 million parameters is trained on millions of images, achieving a ratio of training examples to parameters that suggests substantial compression of the data manifold's structure into weights, not by memorizing pixels, but by encoding the geometric regularities of the data distribution.
When the intrinsic dimension is low, the effective hypothesis space the learner must search is vastly smaller than the ambient dimension suggests.
The Minimum Description Length (MDL) principle~\citep{rissanen1978} formalizes the connection: the best hypothesis minimizes the total description length of model plus data given the model.
Code has unusually short description length because its syntax is restrictive and its semantics are compositional, invalid code cannot even be parsed, let alone compressed efficiently.
The manifold of valid programs is not merely low-dimensional but also highly compressible, which is the condition under which learning converges fastest.
Not all manifolds are equally rich in learnable structure. Empirical measurements of epiplexity across modalities~\citep{jiang2025epiplexity} show that language data carries far more structural information than image data at comparable compute budgets: over 99\% of image information consists of time-bounded entropy (unpredictable pixel values), whereas text encodes proportionally more learnable structure. This asymmetry helps explain why pre-training on text yields capabilities that transfer across domains, while pre-training on images does not.
Manifold structure does not make learning automatic, however. Theoretical work on neural network hardness~\citep{hardness2024} shows that manifold structure with bounded curvature can still create computational hardness. But for practical distributions, the constraints are usually met.
Worst-case unlearnability therefore does not imply practical unlearnability, because real data occupies only a small fraction of the theoretical input space.

This leads to a counterintuitive consequence: neural networks can learn to approximate the behavior of uncomputable functions, not on all inputs (that remains provably impossible) but on the structured submanifold that real-world instances occupy.
The halting problem is undecidable in the worst case, but the programs humans write are drawn from a distribution with strong regularities: bounded loop depth, conventional control flow, and predictable recursion patterns.
On this manifold, termination is not just learnable but practically learnable with high accuracy.
In SV-COMP 2025~\citep{svcomp2025}, automated tools decide termination for all 2,057 benchmark programs, and \citet{sultan2025} show that large language models achieve comparable accuracy on the same benchmarks.
For many structured program distributions, termination can be verified by static analysis, type systems, or syntactic restrictions~\citep{turner2004}.
The epiplexity framework~\citep{jiang2025epiplexity} explains why such approximation succeeds: a computationally bounded observer facing a system with simple underlying rules may need to learn internal representations richer than the generating process itself. The observer cannot brute-force the exact dynamics, so it discovers emergent regularities and approximate invariants that the generating rules do not explicitly contain. The structural information such a bounded observer extracts can exceed the description length of the generating process, a phenomenon that \citet{jiang2025epiplexity} formalize as ``epiplexity emergence'' and demonstrate empirically with cellular automata.
More broadly, just as computability does not imply learnability, uncomputability does not imply unlearnability.
``No correct algorithm exists for all inputs'' is a different claim from ``no useful prediction is possible for typical inputs.''
The theoretical worst case lives in the complement of the manifold, not on it.

The hard cases lie in the tail: rare events far from the training distribution's center of mass, where a code model handles common patterns well but struggles with unusual edge cases.
Accordingly, the learnability hierarchy is best understood as diagnosing practical difficulty on the manifold that data actually occupies, not worst-case impossibility in the full theoretical space.

\subsection{Why Reinforcement Learning Hits a Wall}

Supervised learning exploits the data manifold because its feedback is dense and local: each example directly constrains the model in a specific region.
Recent work confirms this asymmetry. \citet{chu2025} show that supervised fine-tuning memorizes the training distribution, succeeding in-distribution but collapsing on novel variations. RL can generalize beyond the training manifold, but only when it converges, and it requires supervised pretraining as scaffolding.

RL's difficulties originate in the structure of the feedback signal rather than in the richness of the hypothesis space.
The bottleneck is threefold.

\paragraph{Information misalignment.}
In supervised learning, the error signal is immediate, specific, and tied to a single decision.
In RL, the agent takes a sequence of actions and receives reward only at the end, or sporadically. This reward indicates whether the outcome is good or bad but not which action is responsible.
This is the credit assignment problem~\citep{sutton2018}, and in settings with delayed reward, conditional branching, and partial observability, it can become computationally intractable.
Even upgrading the reward to a high-dimensional dense vector does not help when the reward and the environment's state changes are not temporally aligned. In other words, dimensionality cannot compensate for causal misalignment.

\paragraph{Non-stationarity.}
In supervised learning, the data distribution is fixed.
By contrast, in RL, the agent's own policy is part of the environment.
As the policy changes, the state distribution shifts, and past data no longer comes from the same distribution as future data.
Consequently, the standard convergence theorems break down.
\citet{zhang2026} demonstrate that stronger supervised checkpoints can significantly underperform weaker ones after RL. The cause is distribution divergence: supervised training optimizes for one distribution, but RL encounters a different one.

\paragraph{Reflexive reward collapse.}
The target shifts precisely because the learner is pursuing it.
Within the learnability hierarchy, this dynamic places many RL tasks at Level~1.
The problem has a deeper structural root: the agent uses the same parameters to both generate actions and evaluate their consequences, so it cannot genuinely verify its own reasoning.
A related observation, termed \emph{meta-layer fracture}~\citep{li2025yonglin}, suggests that when the same parameters generate both reasoning chains and evaluate their conclusions, the model's posterior tends to converge back toward its prior anchor rather than toward the ground truth, because self-verification and action generation share the same learned biases.
When the reward no longer reflects the actual consequences of the agent's actions, or when the agent maximizes the reward without achieving the intended goal (Goodhart dynamics), the feedback becomes indistinguishable from noise. At that point, the problem slides from Level~1 toward Level~0.
Geometric analysis of reward misspecification in MDPs~\citep{goodhartmdp2024} confirms that Goodhart failure in RL is not an edge case but the default outcome of unconstrained proxy optimization.

\medskip

These three obstacles are not bugs in specific algorithms.
Rather, they are structural features of the problems RL attempts to solve.
When RL succeeds spectacularly, Go, Atari, robotic manipulation, it is typically in domains where at least one obstacle is absent: the reward is immediate and unambiguous, the environment is stationary, or the task admits a closed-form verifier.
These are precisely the conditions that push a task toward Levels~3 or~4 in the hierarchy.
Outside such domains, RL has not produced the smooth, predictable scaling that supervised learning on code has achieved.

\paragraph{Reconciling RL-based reasoning successes.}
A natural objection is the striking recent success of RL-trained reasoning models on code.
Systems such as DeepSeek-R1~\citep{deepseekr1} and OpenAI's o-series models apply RL to improve code generation and mathematical reasoning, often surpassing supervised-only baselines on competitive programming benchmarks.
If code's information structure favors supervised learning, why does RL help?

The resolution is that these systems do not use RL in isolation.
Every successful RL-for-code system begins with a strong supervised pretrained model that has already absorbed the dense, local, compositional structure described in Section~\ref{sec:code}.
RL then operates \emph{on top of} this foundation, using the very Level~4 verification infrastructure (compilers, test suites, type checkers) that makes code special.
Process reward models~\citep{lightman2023} further recover the step-level feedback that na\"ive outcome-based RL discards, effectively re-engineering the dense signal that supervised learning gets for free.
In the language of the hierarchy, these systems succeed not because RL overcomes code's information structure but because they deliberately reconstruct Level~3--4 feedback within the RL loop.

This observation strengthens rather than undermines the thesis.
The pattern across all successful RL-for-reasoning systems is the same: supervised pretraining provides the foundation, and RL fine-tuning works only when coupled with structured verification.
\citet{chu2025} confirm that RL generalizes beyond the supervised distribution but requires supervised scaffolding to converge.
When the verification scaffold is removed or the reward signal is reduced to a single scalar, RL on code degrades to the failure modes described above.
The lesson is not that RL is unnecessary but that RL succeeds on code precisely because code provides the feedback infrastructure that most RL domains lack.
More broadly, the dichotomy between supervised learning and reinforcement learning is less informative than the dichotomy between tasks with and without verifiable feedback structure.
Hybrid pipelines (supervised pretraining followed by RL fine-tuning with process-level verification) represent the field's convergence toward this insight: use supervised learning to absorb distributional structure, then use RL to explore beyond the training distribution, but only when a reliable verifier constrains the exploration.

\section{Discussion}
\label{sec:discussion}

\subsection{The Unified Picture}

Expressibility concerns what a system can represent, while computability asks whether a fully specified problem can be solved by a halting algorithm.
Learnability goes further still: can a system approximate the truth under partial observability, sequential information, and finite resources?
PAC learnability implies computability of hypothesis evaluation (Remark~\ref{rem:pac-comp}), but computability does not imply efficient learnability: cryptographic functions are computable yet not polynomial-time PAC-learnable under standard hardness assumptions.
Expressibility strictly contains computability ($\textup{Computable} \subsetneq \textup{Expressible}$), and neither implies learnability in general.
The broad pattern holds when the mechanism classes are ordered by their constraints: expressibility imposes none, computability requires halting, and learnability requires convergence under adversarial data presentations.
What ultimately bounds a learning system is whether the task permits stable learning, independent of the range of functions the system can express.

This analysis suggests a reframing of how AI relates to classical computation theory.
The classical view positions AI within computability: AI $\subset$ algorithms $\subset$ computable functions.
Modern ML has quietly moved to a different containment: AI $\subset$ statistical prediction.
Statistical prediction does not require the problem to be computable; it requires the data distribution to have extractable structure for a computationally bounded observer~\citep{jiang2025epiplexity}.
This is why large language models write poetry, generate legal analysis, and predict protein structures: the original problems resist formal specification, but the proxies that ML optimizes (next-token likelihood, classification loss, energy minimization) have stable distributional structure that gradient descent can exploit.
It is also why the same systems cannot prove mathematical theorems or guarantee program correctness: these tasks demand logical certainty that no amount of distributional approximation can provide.

The unified template from Section~\ref{sec:template} makes the source of this hierarchy explicit.
All three properties ask $\exists\, \Phi \in \mathcal{M}$ such that $R(\Phi, L) = 0$, but the quantifier structure deepens: expressibility is $\exists\forall$ (shallowest), computability is $\exists\forall$ plus halting, PAC learnability is $\exists\forall\forall\exists$ (four alternations), and generation in the limit is $\exists\forall\forall\exists\forall$ (five alternations, deepest).
Each additional quantifier alternation represents an additional degree of adversarial robustness the mechanism must possess.
This deeper quantifier structure explains why learnability is harder than computability: the mechanism must satisfy more levels of universality, not merely more computation.

\subsection{Implications for Scaling}

The ceiling of model capability is often far above the ceiling of task learnability.
When a task is unlearnable because of its information structure, larger models overfit faster, longer training fails more consistently, and more complex objectives amplify intractability.
\citet{cui2025} illustrate this directly: RL performance in language models is bottlenecked by policy entropy exhaustion. The model progressively commits to narrower outputs, losing the ability to explore alternatives, and this collapse is one-directional.
Indeed, additional compute only accelerates it.

This does not mean algorithms, data, and compute no longer matter.
Rather, the question is where the returns go.
When a task has favorable information structure, each increment compounds reliably.
Conversely, when the structure is hostile, the same investment yields diminishing returns.
The practical question for any ML investment is not ``can we scale further?'' but ``does this task yield a return on the next unit of compute?''
The hierarchy makes this question empirically testable: it predicts an ordering in which tasks at higher levels exhibit more predictable scaling than tasks at lower levels, controlling for model architecture and data volume.
A systematic violation of this ordering would constitute disconfirming evidence for the framework.

\subsection{Data Volume as a Confounder}

A separate objection is that code generation succeeds simply because of data volume: trillions of tokens of open-source code dwarf available interaction data for most RL tasks.
But data volume alone does not explain the pattern.
Natural language has even more training data than code, yet code generation has scaled more reliably for structured reasoning tasks.
Conversely, RL agents in board games and simulations have access to effectively unlimited self-play data, yet still face the scaling obstacles described in Section~\ref{sec:expressibility-hurts}.
The epiplexity framework~\citep{jiang2025epiplexity} formalizes why: the same quantity of data yields different amounts of learnable structure depending on modality, and code's structural properties (verifiability, compositionality, syntactic constraints) are precisely what make each token of code data more informative than a token of unstructured text or pixels.
Data volume and information structure are not independent: the reason so much high-quality code data exists is that code's structural properties make it easy to produce, filter, and verify at scale.

\subsection{Paths Forward}

Future breakthroughs will come not from building ever-larger models alone, but from understanding which problems have learnable structures and from reshaping intractable problems into learnable forms.
Four strategies are available.

\paragraph{Task decomposition.}
An unlearnable task can sometimes be split into subtasks with stable, attributable feedback.
Code generation illustrates this: writing complete software is a monolithic challenge, but predicting the next token is a sequence of local decisions~\citep{radford2019}, each benefiting from the rich structural constraints of the programming language.
The decomposition transforms the information flow available to the learner without changing the ultimate goal.

\paragraph{Engineered feedback structures.}
Three design choices improve learnability: exposing intermediate states so the learner can observe progress, providing feedback that is timely and attributable to specific decisions, and ensuring that different failure modes are distinguishable.
A task where failure produces a single ``wrong'' signal is far less learnable than one where failure produces a specific diagnostic, even when both tasks have the same ultimate objective.
A fourth, overarching principle is to place the verifier \emph{outside} the learner: when the same parameters generate both actions and evaluations, the system cannot escape its own biases (the meta-layer fracture problem described above).
Code naturally satisfies this condition: the compiler is an independent external verifier that shares no parameters with the model.
For domains that lack such infrastructure, engineering an external verification loop, formal proof checkers, symbolic executors, independent critic models, is the most direct route to moving a task from Level~1 or~2 toward Level~4.

\paragraph{Weaker objectives.}
The stronger the learning objective, the more likely it is to be unlearnable.
Global optimality, long-term consistency, and high-level abstraction impose information requirements that realistic feedback channels cannot satisfy.
Weak objectives (locally correct, progressively approximate, verifiable at each step, discardable when wrong) support the most reliable learning systems.
\citet{schapire1990} proves formally that learners barely better than random can be composed into arbitrarily accurate ones. Accordingly, weak objectives are not a compromise but a design principle.
Progress therefore accumulates through many small, reliable steps rather than through a single leap to the global optimum.

\paragraph{Proxy re-encoding.}
Every successful ML application involves a transformation that is rarely made explicit: the original problem is re-encoded into a proxy that admits statistical optimization.
Poetry becomes next-token prediction, medical diagnosis becomes classification over feature vectors, and recommendation becomes expected engagement maximization.
The power of this re-encoding is that it converts apparently non-mathematical problems into learnable ones.
The danger is that the proxy and the original problem can diverge silently: when they align, we get protein structure prediction; when they diverge, we get recommendation systems that maximize engagement while degrading well-being.
Re-encoding matters more than classical information theory suggests. \citet{jiang2025epiplexity} prove that total information in a dataset is invariant to factorization under unbounded computation, but for computationally bounded observers this symmetry breaks: the same data under different factorizations yields different amounts of learnable structure. Chess models trained to predict moves from board states extract more structural information and generalize better to novel tasks than models trained in the reverse order, despite seeing identical data. Re-encoding therefore changes what a bounded learner can extract, not merely how it is presented.
The learnability hierarchy applies not to the original problem but to the proxy, and the gap between proxy and problem is itself not learnable from the proxy's own feedback signal.

\section{Conclusion}
\label{sec:conclusion}

Code generation exemplifies all four strategies above: it is a formal language with clear syntactic boundaries, locally verifiable semantics, and dense failure signals, and the proxy (next-token prediction over valid programs) aligns tightly with the goal because code's verification infrastructure keeps the two in register.
Nevertheless, code's success does not prove that language models reason in any general sense.
A neural network that generates valid code is performing function approximation: it models statistical regularities, not logical entailments.
A model trained on number-theoretic examples might learn to predict that Fermat's equation has no integer solutions for exponents greater than two with high accuracy, but it cannot prove Fermat's Last Theorem.
Code generation succeeds precisely because generation does not require proof.
The Curry--Howard correspondence~\citep{wadler2015} offers an instructive analogy: in constructive type theories, well-typed programs correspond to proofs and types to propositions, so generating a well-typed program is analogous to proof search.
Practical code generation in mainstream languages (Python, JavaScript, Java) does not operate in a full constructive logic, but the analogy captures the essential division of labor: the model searches for candidate programs using statistical pattern matching, while an independent verifier (compiler, type checker, test suite) confirms correctness without needing to discover the solution.
This separation, statistical search for discovery, deterministic verification for correctness, is what makes code generation tractable; domains that lack an independent verifier cannot exploit this division.
It proves only that code is a complex task whose structure happens to be learnable.
The breakthrough reflects a task whose information structure makes learning efficient, not primarily an advance in model sophistication.
The most persistent failures of artificial intelligence have not primarily been failures of insufficient capacity but failures to recognize that the problem itself does not admit learning under available feedback structures.
Better models matter, but only when the target task is learnable.
The next breakthroughs belong to whoever identifies which remaining problems are learnable: semi-formal reasoning with checkable steps, constraint-aware generation within structured spaces, verifiable world models testable against data.
Accordingly, the field that asks ``is this task learnable?'' will make more reliable progress than the field that asks only ``is this model powerful enough?''

\appendix
\section{Notation}
\label{sec:notation}

Table~\ref{tab:notation} collects the principal symbols used throughout the paper.

\begin{table}[ht]
\centering
\caption{Summary of notation.}
\label{tab:notation}
\adjustbox{max width=\textwidth}{%
\begin{tabular}{@{}lp{12cm}@{}}
Symbol & Meaning \\
\midrule
\multicolumn{2}{@{}l}{\textit{Spaces and sets}} \\[0.3em]
$X$ & Instance space (countable set) \\
$\Sigma$ & Finite alphabet \\
$\Sigma^*$ & Set of all finite strings over $\Sigma$ \\
$\mathbb{R}^d$ & $d$-dimensional real space \\
$\mathbb{N}$ & Natural numbers \\
$X^{<\infty}$ & All finite sequences over $X$, i.e.\ $\bigcup_{n=1}^{\infty} X^n$ \\[0.5em]
\midrule
\multicolumn{2}{@{}l}{\textit{Languages and functions}} \\[0.3em]
$L$ & Target language ($L \subseteq X$) \\
$\chi_L$ & Indicator function of $L$: $\chi_L(x) = 1$ iff $x \in L$ \\
$H$ & Halting language \\
$\langle P \rangle$ & Encoding of program $P$ \\
$\mathcal{C}$ & Concept class (collection of languages) \\
$\{L_i\}_{i \in \mathbb{N}}$ & Effective indexing of a concept class \\
$\mathcal{F}$ & Function class ($\subseteq \{f\colon X \to \{0,1\}\}$) \\
$f$ & A function in $\mathcal{F}$ \\
$\mathcal{F}_{\mathrm{comp}}$ & Class of all computable functions $X \to \{0,1\}$ \\[0.5em]
\midrule
\multicolumn{2}{@{}l}{\textit{Machines and algorithms}} \\[0.3em]
$M$ & Turing machine \\
$\mathcal{M}_{\mathrm{total}}$ & Class of total (always halting) Turing machines \\
$\mathrm{enc}$ & Computable encoding $X \to \{0,1\}^*$ \\
$A$ & Learning algorithm (PAC) or identifier (identification in the limit) \\
$G$ & Generator function ($X^{<\infty} \to X$) \\[0.5em]
\midrule
\multicolumn{2}{@{}l}{\textit{PAC learning}} \\[0.3em]
$D$ & Probability distribution over $X$ \\
$\mathcal{H}$ & Hypothesis class \\
$h$ & Hypothesis output by the learner ($h = A(S)$) \\
$S$ & Sample set ($S \sim D^m$) \\
$m$ & Sample size \\
$\varepsilon$ & Error tolerance \\
$\delta$ & Confidence parameter (failure probability bound) \\
$d$ & VC dimension, shorthand for $\mathrm{VC}(\mathcal{H})$ \\[0.5em]
\midrule
\multicolumn{2}{@{}l}{\textit{Generation and identification in the limit}} \\[0.3em]
$\sigma$ & Enumeration of $L$, i.e.\ $(x_1, x_2, \ldots)$ with $\{x_n\} = L$ \\
$\sigma_{\leq n}$ & Prefix $(x_1, \ldots, x_n)$ \\
$N$ & Convergence index (generation/identification locks on after step $N$) \\
$n$ & Sequence position in $\mathbb{N}$ \\
$i$ & Language index in an effective indexing \\[0.5em]
\midrule
\multicolumn{2}{@{}l}{\textit{Risk functionals}} \\[0.3em]
$R_{\mathrm{expr}}(f, L)$ & Expressibility risk: $\sup_{x \in X} |f(x) - \chi_L(x)|$ \\
$R_{\mathrm{comp}}(M, L)$ & Computability risk: $\sup_{x \in X} |M(\mathrm{enc}(x)) - \chi_L(x)|$ \\
$R_{\mathrm{PAC}}(h, L, D)$ & PAC risk: $\Pr_{x \sim D}[h(x) \neq \chi_L(x)]$ \\
$R_{\mathrm{gen}}(G, L, \sigma)$ & Generation risk: $\limsup_{n \to \infty} \mathbbm{1}\{G(\sigma_{\leq n}) \notin L\}$ \\
$R_{\mathrm{nov}}(G, L, \sigma)$ & Novelty risk: $\limsup_{n \to \infty} \mathbbm{1}\{G(\sigma_{\leq n}) \in \{x_1,\ldots,x_n\}\}$ \\
$\hat{R}_m$ & Empirical risk \\[0.5em]
\midrule
\multicolumn{2}{@{}l}{\textit{Unified template}} \\[0.3em]
$\Phi$ & Mechanism (generic: $f$, $M$, $A$, or $G$) \\
$\mathcal{M}$ & Mechanism class (generic: $\mathcal{F}$, $\mathcal{M}_{\mathrm{total}}$, learners, or generators) \\
$R(\Phi, L)$ & Risk functional (generic) \\[0.5em]
\midrule
\multicolumn{2}{@{}l}{\textit{Non-stationarity (Level 1)}} \\[0.3em]
$D_{t+1}$ & Distribution at step $t+1$ \\
$\mathcal{E}$ & Environment response function ($D_{t+1} = \mathcal{E}(h_t)$) \\
$h_t$ & Learner's hypothesis at step $t$ \\[0.5em]
\midrule
\multicolumn{2}{@{}l}{\textit{Other}} \\[0.3em]
$\mathbbm{1}\{\cdot\}$ & Indicator function (1 if condition holds, 0 otherwise) \\
$\Sigma^0_n, \Pi^0_n$ & Levels of the arithmetical hierarchy \\
\end{tabular}}
\end{table}

\bibliographystyle{plainnat}
\bibliography{refs}

\end{document}